\newtheorem{theorem}{Theorem}
\newtheorem{definition}{Definition}
\newtheorem{assumption}{Assumption}
\newcommand{\reals}{\mathbb{R}}
\newcommand{\inputspace}{\mathcal{F}}       
\newcommand{\latentspace}{\mathcal{Z}}      
\newcommand{\latentcomponentdim}{D}      
\newcommand{\latentindices}{\mathbf{L}}    
\newcommand{\classset}{\mathcal{C}}        
\newcommand{\prototypeindices}{\mathbf{P}} 
\newcommand{\inputvec}{\mathbf{x}}         
\newcommand{\instancevec}{\mathbf{v}}      
\newcommand{\latentrep}{\mathbf{z}}         
\newcommand{\latentrepcomp}[1]{\mathbf{z}_{#1}} 
\newcommand{\instancelatentrep}{\mathbf{y}}
\newcommand{\prototypej}[1]{\mathbf{p}_{#1}} 
\newcommand{\instactval}{\actvec(\instancelatentrep)}
\newcommand{\predictor}{\kappa}           
\newcommand{\predictorLatent}{\kappa_{\latentspace}} 
\newcommand{\encoder}{f}         
\newcommand{\basesimilarity}{\text{sim}}           
\newcommand{\explanation}{\mathcal{E}}      
\newcommand{\sigexp}{\phi_{\explanation}}
\newcommand{\actvec}{\mathbf{a}}      
\newcommand{\actveci}[1]{\actvec_{#1}}
\newcommand{\actvecexp}{\actveci{\explanation}}
\newcommand{\simvec}{\mathbf{s}}          
\newcommand{\simspaceconstr}[1]{S_{#1}} 
\newcommand{\logit}[1]{h_{#1}}          
\newcommand{\maxsimval}[1]{\overline{\actvec}_{#1}} 
\newcommand{\minsimval}[1]{\underline{\actvec}_{#1}} 
\newcommand{\simub}{\overline{\text{sim}}_{\explanation}}
\newcommand{\simlb}{\underline{\text{sim}}_{\explanation}}
\newcommand{\maxfavsim}[3]{\actvec^*_{#3}(#1, #2)}
\newcommand{\numclasses}{C}             
\newcommand{\numsimfeatures}{m}         
\newcommand{\classidx}{k}               
\newcommand{\predclassidx}{c}           
\newcommand{\decisionhead}{h}
\newcommand{\prototypedim}{\mathbb{P}}
\newcommand{\activationspace}{\reals^\numsimfeatures}
\newcommand{\outputspace}{\reals^\numclasses}
\newcommand{\argmax}{\operatorname*{argmax}}
\newcommand{\classone}{\htexttt{emperor\_\allowbreak penguin}}
\newcommand{\classtwo}{\htexttt{royal\_\allowbreak penguin}}
\newtcolorbox{running}{breakable,title={Running example},colback=cyan!5!white}
\DeclareRobustCommand\ttfamily
\DeclareTextFontCommand{\htexttt}{\ttfamily\hyphenchar\font=45\relax}
\title{Formal Abductive Latent Explanations for Prototype-Based Networks}
\author {
    Jules Soria,
    Zakaria Chihani,
    Julien Girard-Satabin,\\ 
    Alban Grastien,
    Romain Xu-Darme,
    Daniela Cancila
}
\begin{document}

\maketitle

\begin{abstract}
Case-based reasoning networks are machine-learning models that make predictions based on similarity between the input and prototypical parts of training samples, called prototypes. Such models are able to explain each decision by pointing to the prototypes that contributed the most to the final outcome. As the explanation is a core part of the prediction, they are often qualified as ``interpretable by design". While promising, we show that such explanations are sometimes misleading, which hampers their usefulness in safety-critical contexts. In particular, several instances may lead to different predictions and yet have the same explanation. Drawing inspiration from the field of formal eXplainable AI (FXAI), we propose Abductive Latent Explanations (ALEs), a formalism to express sufficient conditions on the intermediate (latent) representation of the instance that imply the prediction. Our approach combines the inherent interpretability of case-based reasoning models and the guarantees provided by formal XAI. We propose a solver-free and scalable algorithm for generating ALEs based on three distinct paradigms, compare them, and present the feasibility of our approach on diverse datasets for both standard and fine-grained image classification. The associated code can be found here\footnote{\url{https://github.com/julsoria/ale}}.
\end{abstract}




\section{Introduction}
A widely adopted approach to explain neural network decisions is to analyze the
decisions of a model after its training, in a post-hoc
fashion \cite{molnar2025interpretable}. For neural networks in computer vision,
a common line of work consists in
computing the most relevant pixels 
by backpropagating gradients on the input space for a given sample
 \cite{smilkov2017smoothgrads,sundararajan2017axiomatic,choi2025unlearning,montavon2019layer}.

However, such approaches are not without flaws. They have been shown to be
sensitive to malign manipulations \cite{dombrowski2019explanations}, 
raising questions on their usefulness in a setting where a stakeholder wants to provide correct explanations
 \cite{bordt2022post}.
Moreover, some attribution methods may not correlate to the actual model behavior \cite{adebayo2018sanity,hedstroem2024fresh} --- raising questions on what they truly aim to explain --- or display irrelevant features \cite{marquessilva2024explainability}. Finally, their usefulness on actual scenarios involving human users has been challenged \cite{colin2022cannot}.

To overcome such limitations, the emerging field of \emph{formal explainable AI
(FXAI)} \cite{audemard2021explanatory,marques2022delivering,bassan2023towards,shi2020tractable,wolf2019formal}
provides a rigorous framework to characterize and build explanations. 
A recent line of work~\cite{marques2022delivering,bassan2023towards,depalma2024using,wu2023verix}
builds upon the framework of \emph{abductive reasoning}
where explanations are
defined as a subset of input features that are sufficient to justify the model
decision. In particular, it is possible to produce subset-optimal explanations
within this framework, such that removing any single feature from such
explanations can include elements that change the classifier's decision. 
FXAI provides strong guarantees on the relevance of features in the explanation, thanks to the use of automated provers that directly query the model. 
As a result, FXAI represents a good compromise between compactness and correctness --- which are deemed important characteristics of an explanation \cite{nauta2023anecdotal,miller2019explanation}.

Although FXAI is a promising approach, it
suffers from two main shortcomings:
\begin{enumerate}
    \item such approaches rely on expensive prover calls, impacting scalability on realistic computer vision tasks: such problems are generally NP-complete \cite{katz2017reluplex};
    \item abductive FXAI provides explanations at the \emph{feature-level},
    which, for typical computer vision applications, is a pixel. We argue that
    the pixel-level is not the correct level of abstractions for the human
    final user of the explanation. Pixel-level explanations
    rely on the model's perception of the problem, creating a knowledge gap
    between the human and the machine \cite{miller2019explanation}.
    On the other hand, prototypes or concepts allow generalizing facts towards higher-level reasoning \cite{lake2016building}.
\end{enumerate}


\emph{Case-based reasoning} is an orthogonal approach to FXAI to explain neural network decisions. 
Under this setting, the neural network is designed to justify its
decision by exposing examples from its dataset that are similar to the new
sample. Such approach is exemplified by prototype
learning \cite{chen2019this,rymarczyk2020protopshare,van2021interpretable,nauta2021neural,rymarczyk2022interpretable,willard2024looks, sacha2024interpretability}
or concept learning \cite{kim2018interpretability,fel2023craft,de2025v,koh2020concept, espinosa2022concept, helbling2025conceptattention}.
Prototype-based approaches have been shown to be human-interpretable through user-studies \cite{davoodi2023interpretability}.

Case-based reasoning
explanations justify the decision by exposing a certain number of
prototypes or concepts to the user. One major
drawback is that the prototypes are usually not sufficient to entail
the decision.
In the original
ProtoPNet \cite{chen2019looks}
architecture, the number of prototypes included in the explanation is fixed as an arbitrary hyperparameter. 
The resulting explanations could omit relevant prototypes,
resulting in misleading explanations.


\subsection*{Contributions} 


In this paper, we aim to bridge the gap between FXAI and prototype-based
learning. 
We show that, given a trained case-based reasoning network, we can  produce abductive
explanations that are correct not only at the pixel-level, but also in the latent space.
We consider that the produced explanations are more suitable for
humans than pixel-level ones, while being sufficient to justify the model's
behavior.

We propose a framework to describe Abductive Latent
Explanations (ALE) for prototype-based networks. Crucially, our formalism is generic as it can
be instantiated given a definition of \emph{feature extractor},
\emph{prototype} and how \emph{similarity} links prototypes to the current
sample.

We provide three paradigms for computing ALEs, in order
to circumvent the need for costly prover calls, and produce explanations that guarantee the prediction.

We evaluate our approach on numerous datasets and architectures variations to show the feasibility of building ALEs and their drawbacks.



\section{Preliminaries}
\label{sec:bg}


Our approach requires a system with the following components:

\begin{enumerate}
    \item an \textbf{image encoder} $\encoder$ 
whose goal is to map the input from an input space $\inputspace$ to a latent representation space $\latentspace$;
    \item a \textbf{prototype layer} that measures the distance between the \textit{latent} representation $\latentrep$ of the image and learned prototypes $\prototypej{j}$, $j \in \prototypeindices$, and assigns to them an \textit{activation} score;
    \item  a \textbf{decision layer} $\predictorLatent$ that
    leverages the prototype activation scores to perform instance classification.
\end{enumerate}

In the rest of our paper, we leverage the prototype and decision layers as defined in  the original ProtoPNet architecture \cite{chen2019looks} and follow-ups \cite{willard2024looks} as they represent the state-of-the-art of prototype-based models, and implementations are readily available \cite{xu2024cabrnet}. 



To give an intuition on abductive explanations in the latent space, 
we provide
a running example 
after briefly introducing the notations used in our paper.

\subsection*{Notations}
Refer to Table~\ref{tab:notations} for an overview of all notations.
The predictor function $\predictor: \inputspace \to \classset$ is defined as $\predictor = \argmax \circ \,\decisionhead \circ \actvec \circ \encoder$, where $\encoder$ maps inputs to the latent space, $\actvec$ computes prototype activations, and $\decisionhead$ produces class logits. 
The similarity matrix $\mathbf{sim}(\latentrep, \mathbf{p})$, with $\mathbf{p} = (\prototypej{j})_{j \in \prototypeindices}$, has entries $\text{sim}(\latentrepcomp{l}, \prototypej{j})$ for $l \in \latentindices$, $j \in \prototypeindices$. Explanations $\explanation$ are a subset of the latent features in $\prototypeindices\times\latentindices$.


\begin{table}[t!]
\small
\centering
\begin{tabularx}{\columnwidth}{@{}l l >{\raggedright\arraybackslash}X @{}}
\toprule
\textbf{Symbol} & \textbf{Domain} & \textbf{Description} \\
\midrule
$\classset$ & Classes & Set of $\numclasses$ classes, $\{1,\ldots,\numclasses\}$. \\
$\inputspace$ & $\mathbb{R}^{H_0 \times W_0 \times C_0}$ & Input space (e.g., images). \\
$\latentspace$ & $\mathbb{R}^{H_1 \times W_1 \times \latentcomponentdim}$ & Latent representation space. \\
$\prototypedim$ & $\mathbb{R}^\latentcomponentdim$ & Latent component space. \\
$\activationspace$ & & Prototype activation vector space. \\
$\outputspace$ & & Class logit space. \\
\midrule
$\latentindices$ & $\mathbb{N}$ & component indices, e.g., $H_1 \times W_1$. \\
$\prototypeindices$ & $\mathbb{N}$ & prototype indices, $\{1,\ldots,m\}$. \\
\midrule
$\instancevec; \inputvec$ & $\in \inputspace$ & Input instance (e.g., image). \\
$\instancelatentrep; \latentrep$ & $\in \latentspace$ & Latent representation of $\instancevec$; $\inputvec$. \\
$\latentrepcomp{l}$ & $\in \prototypedim$ & $l$-th latent component, $l \in \latentindices$. \\
$\prototypej{j}$ & $\in \prototypedim$ & $j$-th prototype, $j \in \prototypeindices$. \\
$\actvec(\latentrep)$ & $\in \activationspace$ & Activation vector. \\ & & $j$-th component is $\actveci{j}(\latentrep)$. \\ 
$\predclassidx$ & $\in \classset$ & Predicted class. $\predclassidx\!=\!\predictor(\instancevec).$\\
$m$ & $\in \mathbb{N}$ & Number of prototypes.\\
\midrule
$\explanation$ & $\prototypeindices \times \latentindices$ & Abductive Latent Explanation. \\
$\actvecexp$ & $\subseteq \activationspace$ & Constrained activation space. \\
$\maxsimval{\explanation, j}$ & $\in \mathbb{R}$ & Upper bound for $j$-th activation. \\
$\minsimval{\explanation, j}$ & $\in \mathbb{R}$ & Lower bound for $j$-th activation. \\
\midrule
$d(\cdot, \cdot)$ & $\prototypedim \times \prototypedim \to \mathbb{R}$ & Distance metric in $\prototypedim$. \\
$\sigma(\cdot)$ & $\mathbb{R} \to \mathbb{R}$ & Distance-to-similarity function. \\
$\text{sim}(\cdot, \cdot)$ & $\prototypedim \times \prototypedim \to \mathbb{R}$ & Similarity, $\text{sim}(a,b) \!=\! \sigma(d(a,b))$. \\
$\simub(\cdot, \cdot)$ & $\prototypedim \times \prototypedim \to \mathbb{R}$ & Upper bound on similarity. \\
$\simlb(\cdot, \cdot)$ & $\prototypedim \times \prototypedim \to \mathbb{R}$ & Lower bound on similarity. \\

\midrule
$\encoder$ & $\inputspace \to \latentspace$ & Maps inputs to latent space. \\
$\actvec$ & $\latentspace \to \activationspace$ & Prototype activation function:\\
& & $j\in\prototypeindices. \actveci{j}(\latentrep) \!=\!\quad \underset{l \in \latentindices}{\max}\; \text{sim}(\latentrepcomp{l}, \prototypej{j})$. \\

$\decisionhead$ & $\activationspace \to \outputspace$ & Outputs class logits. \\
$\predictor$ & $\inputspace \to \classset$ & Predictor function:\\ 
& & $\quad\predictor = \argmax \circ\, \decisionhead \circ \actvec \circ \encoder$. \\

$\predictorLatent$ & $\latentspace \to \classset$ & Latent predictor function:\\
& & $\quad\predictorLatent = \argmax \circ\, \decisionhead \circ \actvec$. \\
\bottomrule
\end{tabularx}
\caption{Summary of Notations Used.}
\label{tab:notations}
\end{table}

\subsection*{Running example}
\label{sec:running-ex}

\begin{figure*}[t!]
\centering
{
    \includegraphics[width=\textwidth]{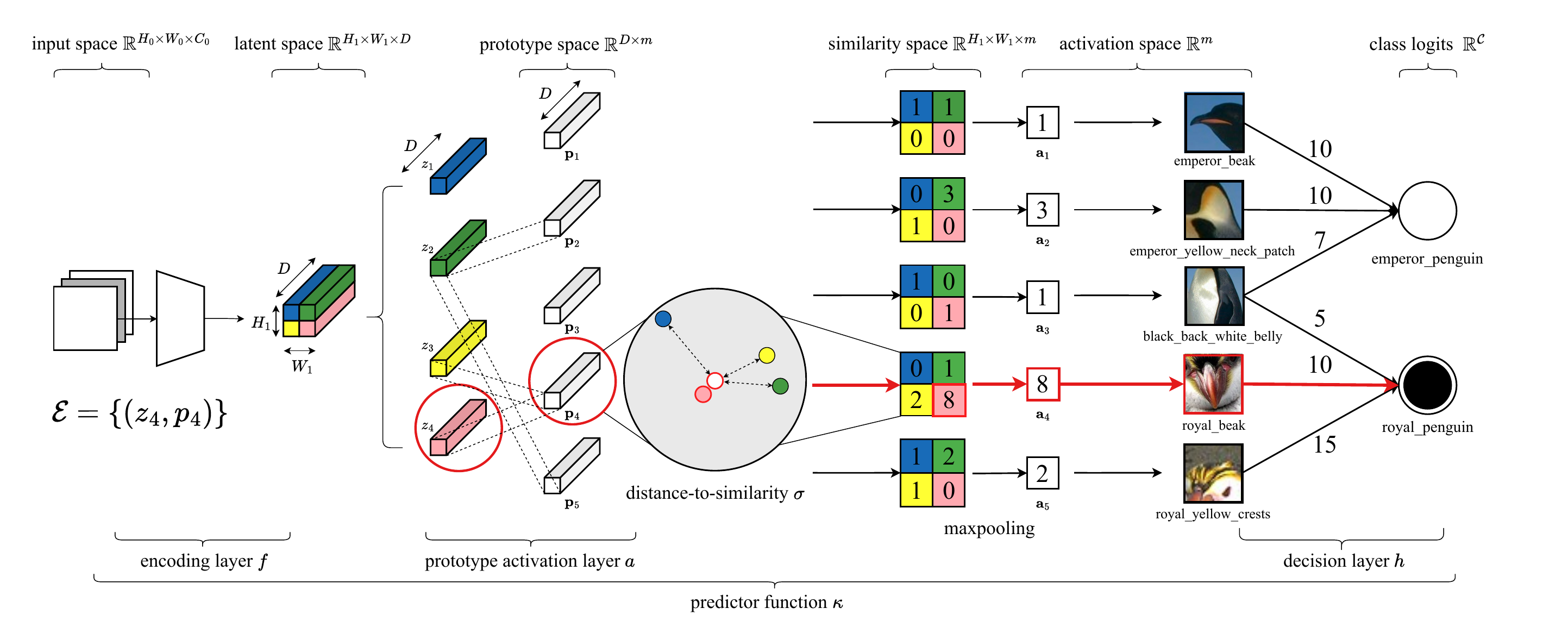}
}
\caption{Example of a top-$1$ explanation for a ProtoPNet with five prototypical parts for two classes.
 }
\label{fig:running-ex}
\end{figure*}

We consider a 2-class classification problem with \classone\ and \classtwo.
During inference, an image input $\instancevec$ first goes through an \textbf{image encoder} $\encoder$ and is represented as $\latentrep$, i.e. an object in the latent space $\latentspace$, which sums up the important concepts \emph{locally }identified. This latent representation is composed of 4 ($H_1\times W_1$) latent vectors, with width $W_1=2$ and height $H_1=2$ (Figure~\ref{fig:running-ex}).

A \emph{prototype} is a latent vector extracted from a training image that the training procedure
has identified as representative of some class.  
In our example, we assume that the training procedure computed five prototypes, akin to the following concepts:
\htexttt{emperor\_\allowbreak beak}, \htexttt{emperor\_\allowbreak yellow\_\allowbreak neck\_\allowbreak patch},
\htexttt{black\_\allowbreak back\_\allowbreak white\_\allowbreak belly}, \htexttt{royal\_\allowbreak beak},
and \htexttt{royal\_\allowbreak yellow\_\allowbreak crests} (these labels are solely used to make the example easy to follow; they are neither inputs of the training nor produced by it).
We use $\prototypej{}$ to refer to the vector of prototypes and $\prototypej{j}$ for the $j$-th prototype.

Secondly, the \textbf{prototype layer}  $\basesimilarity(\latentrep,\prototypej{})$ 
computes a similarity score between each latent vector (row) and the prototypes (column), where higher value means higher similarity:
\begin{align*}
    \basesimilarity(\latentrep,\prototypej{}) &= 
    \begin{bmatrix}
1. & 0. & 1. & 0. & 1. \\
1. & 3. & 0. & 1. & 2. \\
0. & 1. & 0. & 2. & 1. \\
0. & 0. & 1. & 8. & 0. 
\end{bmatrix}
\end{align*}
In this example,  the fourth prototype ({\htexttt{royal\_\allowbreak beak}}, fourth column) has been strongly recognized
in the bottom-right part of the image (fourth row), hence a similarity score of $8$.
Similarly, the second prototype (\htexttt{emperor\_\allowbreak yellow\_\allowbreak neck\_\allowbreak patch}) is moderately recognized, with a score of $3.$, 
while the other prototypes are absent (score 2 or lower).
For each prototype, the ProtoPNet architecture is only interested in the top similarity, and computes the column-wise maximum value of $\basesimilarity(\latentrep,\prototypej{})$ called the \emph{activation vector}
\begin{align*}
    \actvec(\latentrep) &= \begin{bmatrix}
1. & 3. & 1. & 8. & 2.
\end{bmatrix}
\end{align*}
These values mean that the network detected a clear royal beak and what appears to be an emperor yellow neck patch, but no other attribute one would expect from a penguin.

Thirdly, in the \textbf{decision layer}, the \emph{activation} vector
is fed to a fully connected linear layer with learned parameters $W$: 
\begin{align*}
    W = \begin{bmatrix}
        10. & 10. & 7. & 0.  & 0.\\
        0.  & 0.  & 5.  & 10. & 15.
        \end{bmatrix}
\end{align*}
Matrix $W$ indicates that prototypes $1$ and $2$
are typical of emperor penguins (weights $10.$ against $0.$)
and prototypes $4$ and $5$ of royal penguins (weights $10.$ and $15.$ versus $0.$),
while prototype $3$ can be detected in both classes.
The decision layer's computation returns the final class scores:
\begin{align*}
    h(\actvec(\latentrep)) &=  W\actvec(\latentrep) = \begin{bmatrix}
    47. & 115.
    \end{bmatrix} 
\end{align*}
and the classifier thus outputs the second class:
$\predictorLatent(\latentrep) = {\argmax}\;h(\actvec(\latentrep)) = 2$, i.e., $\classtwo$.

ProtoPNet returns as an explanation the $k$ prototypes with highest activation.
In the running example, if $k\!=\!1$, the explanation is $\explanation\!=\!\{(\latentrepcomp{4},\prototypej{4})\}$ which involves $\text{sim}(\latentrepcomp{4},\prototypej{4})\!=\! 8$.
We highlight that the above explanation implicitly entails
that the score of all other prototypes is $8$ or below.
At first glance, such an explanation can appear appropriate:
the bird on the image is classified as a \classtwo\,
because a royal beak (typical of royal penguins) has been observed.

However, a closer examination reveals that the explanation is wrong for some instances.
Let us consider the following counter-example:
an image having latent representation $\latentrep'$
such that the activation evaluates to
\begin{align*}
    \actvec(\latentrep') &= \begin{bmatrix}
6. & 7. & 1. & 8. & 2.
\end{bmatrix}
\end{align*}
The classification is $\predictorLatent(\latentrep') \!=\! \argmax \begin{bmatrix}
    137. & 115.
    \end{bmatrix} \!=\! 1$
while the previous explanation (``{\em because it shows a royal beak}'') is still applicable to \classone. 
This stems from the fact that, following ProtoPNet fashion, we defined the explanation to only take the \emph{top-activated prototype}, while other, less active but still important prototypes, are needed to fully justify the decision.
In other words, the explanation
is misleading or \emph{optimistic} as discussed in \cite{ignatiev2019validating}. 
Hence, the need for formal guarantees on prototypes explanations --- abductive latent-based explanations.

\section{Abductive Latent Explanations}

\label{sec:latent-abductive}
An Abductive eXplanation (AXp) for $\instancevec$ is traditionally defined as a condition on the input of a classifier
satisfied by the current instance
such that all instances that satisfy this condition yield the same output \cite{ignatiev2019abduction}. 
In other words, an AXp defines \emph{preconditions} on the inputs of a classifier, yielding a \emph{postcondition} on the classification of said classifier \cite{dijkstra1976discipline}. 
Formally, using notations presented in Table~\ref{tab:notations}, given 
an instance $\instancevec$ with prediction $\predclassidx$,
an \emph{abductive explanation} is a precondition $\sigexp(\inputvec,\instancevec)$
over input $\inputvec$ satisfied by $\instancevec$
such that:
\begin{equation*}
    \forall \inputvec \in \inputspace. \quad \sigexp(\inputvec,\instancevec) \Rightarrow (\predictor(\inputvec) = c).
\end{equation*}
In previous works \cite{marques2022delivering}, the explanation is represented as a subset $\explanation$ of input variables (features),
and the precondition $\sigexp$ simply states that the assignments of these variables
should match those of $\instancevec$:
\begin{equation*}
    \sigexp(\inputvec,\instancevec) = \bigwedge_{i \in \explanation} (\inputvec_i = \instancevec_i).
\end{equation*}

Our main contribution is the extension of the definition of AXps to an \emph{arbitrary latent space}.

In the case of image recognition,  an explanation
is then a subset of the image pixels.
While this explanation is correct (i.e., any image that includes these specific pixels
will be classified the same) 
its interpretability is questionable.



Since
 explanations are preconditions on the input space of the classifier, in the context of case-based reasoning, we can define such preconditions on the \emph{input of the latent classifier}.

\begin{definition}[Abductive Latent Explanation (ALE)]
  \label{def:latent-based-exp2}
  Given an input instance $\instancevec$ with latent representation 
  $\encoder(\instancevec)$,
  an \emph{abductive latent explanation} is a subset of latent features $\explanation$ that entails a precondition
  $\sigexp$
  over $\encoder(\inputvec)$ satisfied by $\encoder(\instancevec)$.
  
  Given an input space $\inputspace$ and
  a predictor $\predictor = \predictorLatent \circ \encoder$
  from $\inputspace$ to $\classset$,
  the explanation is \emph{correct} if the following holds:
\begin{equation*}
    \forall \inputvec \in \inputspace.
    \quad \sigexp(\encoder(\inputvec),\encoder(\instancevec))
    \Rightarrow (\predictor(\inputvec) = c).
\end{equation*}
\end{definition}

If $\explanation$ is an ALE, it is not necessarily subset-minimal. Indeed, when $\explanation \subseteq \prototypeindices\times\latentindices$ then $\prototypeindices\times\latentindices$ is an ALE. We take a particular interest in defining (and later computing) subset-minimal explanations, assuming 
that a smaller subset is more human-interpretable than the whole superset.

\begin{definition}[Subset-Minimal ALE]
\label{def:subset-minimal-ale}
An Abductive Latent Explanation $\explanation$ for an instance $\instancevec$ is \textbf{subset-minimal} if no proper subset of it is also an ALE for the same instance. 
This can be expressed formally as:
\begin{equation*}
\forall \explanation' \subsetneq \explanation, \;\; \exists \inputvec' \in \inputspace \text{ s.t. } \Big( \phi_{\explanation'}(\encoder(\inputvec'),\encoder(\instancevec)) \land (\predictor(\inputvec') \neq c) \Big).
\end{equation*}
\end{definition}

Similarly, we can formalize the implicit definition of an explanation that is used in ProtoPNet as follows:
\begin{definition}[ProtoPNet Explanation]
\label{def:protopnetxp}
  Given a set $\prototypeindices$ of prototype indices,
  a \emph{ProtoPNet explanation} $\explanation \subseteq \prototypeindices$ is a subset of indices
  that implicitly represents the precondition $\sigexp$:
  \begin{equation*}
  \sigexp(\latentrep, \instancelatentrep) \!=\!
    \left(\bigwedge_{i \in \explanation}
      \actveci{i}(\latentrep) \!=\! \actveci{i}(\instancelatentrep)
    \right)
    \!\land\!
    \left(\bigwedge_{\substack{j \not\in \explanation \\ i \in \explanation}}
      \actveci{j}(\latentrep) \!\le\! \actveci{i}(\instancelatentrep)
    \right).
  \end{equation*}
\end{definition}

Compared to the pixel-based abductive explanations,
we consider that ALEs are more interpretable
as they refer to the concepts that humans are able to manipulate.
Furthermore, compared to relevance-based explanations,
ALEs provide formal guarantees
as it is impossible to come up with misleading 
(or optimistic) 
explanations (i.e., no sample matching the explanation would be classified differently).


\section{Building abductive explanations in the latent space}\label{sec:building-abductive} 
\label{sec:build_ale}

In this section, we show how to design the precondition entailed by an ALE
that relies on the bounds of intermediate prototype activation scores.
Thus, an explanation $\explanation$ will implicitly entail a condition of the form
\begin{align*}
    \sigexp(\latentrep, \instancelatentrep) &= 
    \left(\wedge_{j=1}^m \actveci{j}(\latentrep) \in \left[\minsimval{\explanation, j}, \maxsimval{\explanation,j} 
    \right]
    \right).
\end{align*}

In this setting, recall from Section~\ref{sec:bg} that activations are computed from similarity scores, such that $\text{sim}(a,b) \!=\! \sigma(d(a,b))$ where $\sigma$ is monotonous function which \textit{increases} as the distance \textit{decreases}\footnote{In \cite{chen2019looks}, the function $\sigma(x)\!=\!\log\left(\frac{x+1}{x+\epsilon}\right)$ is used.}. Boundaries on one can be translated to boundaries on the other. Therefore, we focus next on deducing boundaries on the distance between prototypes and feature vectors, which propagate into boundaries in the activation space, then into boundaries in the class logits space.

\subsection{Domain and Spatial Constraints}

Domain-restricted AXps \cite{yu2023eliminating} refine the classical definition by constraining "unfixed" variables in the domain space.
Similarly, we note that prototypes are fixed in the latent space $\latentspace$. As such, latent space feature vectors are bounded by their distance to these prototypes. Thus, we can 
define a
precondition 
which relies on 
the \emph{relationship between a latent vector and its location \emph{w.r.t.} other prototypes}. 

A cornerstone of our approach is to analyse how giving more information about the distance between a latent component and a prototype changes the interval boundaries for the \emph{other prototype activation scores}, exploiting these spatial constraints. Furthermore, what follows in this section is true for any \emph{true} distance function, such as the $L_2$ distance.

We propose two methods to compute the interpretation (the bounds on the activation) of these explanations.
In our case,
an ALE $\explanation$ is a subset of $\prototypeindices \times \latentindices$ (where $\latentindices$ is a set of indices of components in $\latentspace$, as defined in Table~\ref{tab:notations}).

\subsubsection{Triangular Inequality}
In order to reach prototype activation score boundaries 
while ensuring stronger ``awareness"
that the prototypes and feature vectors co-exist in the same dimensional space $\prototypedim$, we propose to use the \emph{triangular inequality}, a property of distance functions.
Given 
$\mathbf{z}_l, \mathbf{p}_i, \mathbf{p}_j$~:
\begin{equation*}
    |d(\mathbf{z}_l,\mathbf{p}_j) \!-\! d(\mathbf{p}_j,\mathbf{p}_i)| \leq d(\mathbf{z}_l,\mathbf{p}_i) \leq d(\mathbf{z}_l,\mathbf{p}_j) \!+\! d(\mathbf{p}_j,\mathbf{p}_i).
\end{equation*}

By definition, the prototypes $\mathbf{p}_{1,\ldots,m}$ are an integral part of the classifier model; we have access to $d(\mathbf{p}_i,\mathbf{p}_j), \forall i,j$.

Given the distance between a singular feature vector $\latentrepcomp{l}$ and a prototype $\prototypej{j}$, the Triangular Inequality naturally provides boundaries on the similarity score $\text{sim}(\latentrepcomp{l}, \prototypej{i})$ between that feature vector $\latentrepcomp{l}$ and all other prototypes $\prototypej{i\neq j}$. 
From these similarity boundaries (for all feature vector-prototype pairs), an upper and lower bound on the prototype activation score $\actvecexp$ can be deduced.
An ALE $\explanation$ then entails $\sigexp$ with the boundaries $\{\simlb, \simub\}$ defined through:
\begin{align*}
    &\forall (l,j) \in \explanation.\, \simub(\latentrep_{l}, \prototypej{j}) \!=\! \simlb(\latentrep_{l}, \prototypej{j}) \!=\! \text{sim}(\latentrep_{l}, \prototypej{j})\\
    &\forall (l,i) \notin \explanation.\;
    \simlb(\latentrep_{l}, \prototypej{i}) \!=\! \underset{(l,j) \in \explanation}{\max}\sigma\!\left(d\left(\latentrep_{l}, \prototypej{j}\right) \!+\! d\left(\prototypej{j}, \prototypej{i}\right)\right)\\ &\quad\quad\quad\quad\;\;\;
    \simub(\latentrep_{l}, \prototypej{i}) \!=\! \underset{(l,j) \in \explanation}{\min}\sigma\!\left(|d(\latentrep_{l}, \prototypej{j}) \!-\! d\!\left(\prototypej{j}, \prototypej{i}\right)\!|\right)\!.
\end{align*}
These boundaries give $\actvecexp$ obtained by,
\begin{align*}
    \forall j\in\prototypeindices.\quad\minsimval{\explanation, j} &= \underset{l \in \latentindices}{\max}\;\simlb(\latentrepcomp{l}, \prototypej{j})\\
    \maxsimval{\explanation, j} &= \underset{l \in \latentindices}{\max}\;\simub(\latentrepcomp{l}, \prototypej{j})
    .
\end{align*}

\subsubsection{Hypersphere Intersection Approximation}
An alternative way of devising such constraints is to see the latent feature vectors as intersections between hyperspheres - where those hyperspheres have the prototypes as centers, and the distance between the vector and those prototypes as radii.
It is thus possible to deduce distance 
boundaries by \emph{overapproximating hyperspheres intersection}. Such intersection is itself a hypersphere that contains the latent feature vector.
In this spatial paradigm, at each ``iteration", i.e., for a latent feature vector, every time we update/extend the explanation by adding to it a distance between that vector and a new prototype, we refine the size of the hypersphere containing it.

One 
advantage
of this approximation is that the resulting hypersphere is, at worst, the same size (radius-wise) as the smallest of the two intersecting hyperspheres. When we use the previous approximation as one of the two hyperspheres, we have the guarantee that the next approximation will necessarily be better.
In contrast, when relying on the triangular inequality, adding a feature-prototype pair to the explanation may not help refine the boundaries concerning that feature vector.
An additional figure
which helps visualise the approximation
can be found
in the Appendix.
\begin{definition}[Hypersphere Intersection Approximation]
\label{def:hypersphere_radius}
Let $H_1$ and $H_2$ be two hyperspheres in a Euclidean space $\mathbb{R}^{\latentcomponentdim}$ with centers $\mathbf{C}_1, \mathbf{C}_2 \in \mathbb{R}^{\latentcomponentdim}$ and radii $r_1, r_2 > 0$, respectively. Assume the hyperspheres have a non-empty intersection, which requires the distance between centers $d = \|\mathbf{C}_1 - \mathbf{C}_2\|_2$ to satisfy $|r_1 - r_2| \le d \le r_1 + r_2$.
Let $H_3$ be the hypersphere approximating the intersection $H_1 \cap H_2$, constructed as described above. Then, $H_3$ has a radius $r_3$:
\[
r_3 = \frac{2}{d} \sqrt{p(p - d)(p - r_1)(p - r_2)}
\]
where $p = \frac{1}{2}(d + r_1 + r_2)$.
The center $\mathbf{C}_3$ of $H_3$ 
is:
\[
\mathbf{C}_3 = \mathbf{C}_1 + \sqrt{r_1^2 - r_3^2} \cdot \frac{\mathbf{C}_2 - \mathbf{C}_1}{d}
.\]

\end{definition}

\begin{theorem}[Containment and Minimality of the Hypersphere Intersection Approximation]
\label{thm:containment_minimality}
Let $H_1$ and $H_2$ be two hyperspheres in a Euclidean space $\mathbb{R}^{\latentcomponentdim}$ with centers $\mathbf{C}_1, \mathbf{C}_2$ and radii $r_1, r_2$, respectively. Let their surfaces intersect in a non-empty set $S_{int}$, defined as:
\[
S_{int} = \{ \mathbf{s} \in \mathbb{R}^{\latentcomponentdim} \mid \|\mathbf{s} - \mathbf{C}_1\|_2 = r_1 \text{ and } \|\mathbf{s} - \mathbf{C}_2\|_2 = r_2 \}
.\]
Let $H_3$ be the approximating hypersphere with center $\mathbf{C}_3$ and radius $r_3$ as defined in Definition~\ref{def:hypersphere_radius}.
Then:
\begin{enumerate}
    \item \textbf{(Containment)} The intersection of the surfaces is entirely contained within the approximating hypersphere $H_3$. That is, $S_{int} \subseteq H_3$.
    \item \textbf{(Minimality)} $H_3$ is the smallest possible hypersphere by radius that contains $S_{int}$. For any hypersphere $H'$ with radius $r'$ such that $S_{int} \subseteq H'$, it must be that $r_3 \le r'$.
\end{enumerate}
\end{theorem}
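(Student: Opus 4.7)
The plan is to prove both parts via an orthogonal decomposition along the line joining $\mathbf{C}_1$ and $\mathbf{C}_2$. Setting $\mathbf{u} := (\mathbf{C}_2 - \mathbf{C}_1)/d$ and writing any candidate point as $\mathbf{s} = \mathbf{C}_1 + \alpha \mathbf{u} + \mathbf{v}$ with $\mathbf{v} \perp \mathbf{u}$, the two defining equalities for $S_{int}$ become $\alpha^2 + \|\mathbf{v}\|_2^2 = r_1^2$ and $(\alpha - d)^2 + \|\mathbf{v}\|_2^2 = r_2^2$. Subtracting gives the unique axial coordinate $\alpha = (d^2 + r_1^2 - r_2^2)/(2d)$, and therefore $\|\mathbf{v}\|_2^2 = r_1^2 - \alpha^2$. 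A symbolic manipulation essentially equivalent to Heron's formula for the triangle with side lengths $d, r_1, r_2$ would then identify $\sqrt{r_1^2 - \alpha^2}$ with $r_3$ and $\mathbf{C}_1 + \alpha \mathbf{u}$ with $\mathbf{C}_3$ as given in Definition~\ref{def:hypersphere_radius}.

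Containment then follows immediately: every $\mathbf{s} \in S_{int}$ satisfies $\mathbf{s} - \mathbf{C}_3 = \mathbf{v}$ with $\|\mathbf{v}\|_2 = r_3$, so $S_{int}$ lies on the surface of $H_3$, in particular inside it. For minimality, I would observe that while $\alpha$ is uniquely fixed, $\mathbf{v}$ is free to range over the entire $(\latentcomponentdim - 2)$-sphere of radius $r_3$ inside the hyperplane $\mathbf{u}^\perp$; hence $S_{int}$ is a $(\latentcomponentdim - 2)$-sphere of radius $r_3$, collapsing to a single point exactly when $r_3 = 0$, i.e.\ at the tangency extremes $d = r_1 + r_2$ or $d = |r_1 - r_2|$. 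In the generic case $S_{int}$ therefore contains pairs of antipodal points at distance $2 r_3$, so $\mathrm{diam}(S_{int}) = 2 r_3$. Applied to any hypersphere $H'$ of radius $r'$ containing $S_{int}$, the elementary inequality $\mathrm{diam}(A) \leq 2 r'$ for $A$ contained in a ball of radius $r'$ forces $r' \geq r_3$, which yields the minimality claim.

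The main obstacle is almost entirely bookkeeping rather than insight: the algebraic identification of $\sqrt{r_1^2 - \alpha^2}$ with $\frac{2}{d}\sqrt{p(p-d)(p-r_1)(p-r_2)}$ is a routine Heron-style computation but easy to mis-sign, and one should check that the constructed $\mathbf{C}_3$ indeed lies on the segment between the two centers (the sign of $\sqrt{r_1^2 - r_3^2}$ in Definition~\ref{def:hypersphere_radius} must agree with $\alpha$ rather than $-\alpha$). The degenerate tangent cases ($r_3 = 0$) and the low-dimensional case $\latentcomponentdim = 1$ also need a short comment to confirm that both conclusions hold trivially, minimality being immediate when $S_{int}$ reduces to a single point.
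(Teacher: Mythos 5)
Your proposal is correct and follows essentially the same route as the paper: both identify $S_{int}$ as a sphere centred at the orthogonal projection of its points onto the line through $\mathbf{C}_1,\mathbf{C}_2$ with radius given by the Heron-type formula (the paper via the Pythagorean theorem and the triangle-area identity, you via an explicit coordinate decomposition, which is the same computation), and both deduce minimality from a pair of antipodal points of $S_{int}$ together with the triangle inequality. Your caveat about the sign of $\sqrt{r_1^2 - r_3^2}$ (Definition~\ref{def:hypersphere_radius} implicitly assumes the projection lies on the $\mathbf{C}_2$ side, i.e.\ $d^2 + r_1^2 \ge r_2^2$) and about the degenerate tangency cases is a fair point the paper's proof passes over silently, but it does not alter the argument.
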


\begin{proof}
Found in the Appendix.
\end{proof}


With this paradigm, an ALE $\explanation$ entails $\sigexp$ with $\actvecexp$ 
:
\begin{align*}
    \forall (l,j) &\in \explanation.\\ 
    &\simub(\latentrep_{l}, \prototypej{j}) = \simlb(\latentrep_{l}, \prototypej{j}) = \text{sim}(\latentrep_{l}, \prototypej{j})\\
    \forall (l,j) &\notin \explanation.\\ 
    &\simub(\latentrep_{l}, \prototypej{j}) = \sigma\!\left(d\left(C_{\latentrepcomp{l}}, \prototypej{j}\right) - r_{\latentrepcomp{l}}\right)\\  
    &\simlb(\latentrep_{l}, \prototypej{j}) = \sigma\!\left(d\left(C_{\latentrepcomp{l}}, \prototypej{j}\right) + r_{\latentrepcomp{l}}\right)
\end{align*}
with $C_{\latentrepcomp{l}}\text{ and } r_{\latentrepcomp{l}}$ the center and radius, respectively, of the approximated hypersphere containing $\latentrepcomp{l}$.

This approximation is possible because, by construction, all considered hyperspheres \textbf{do} intersect in at least one point, the latent feature vector we attempt to approximate.

\subsubsection{Top-k explanations}
This paradigm is the one used implicitly by the original explanation \cite{chen2019looks} provided by ProtoPNet. It traverses the prototype activation scores in decreasing order, with the added knowledge that, for that prototype, the similarity scores with the other latent space feature vectors are lesser than the activation score (result of the \texttt{max} function).
In that scenario, we obtain $\actvecexp$ from $\explanation$ by:
\begin{align*}
    \forall j \in \explanation&.\quad
    \minsimval{\explanation, j} = \maxsimval{\explanation, j} = \actveci{j}(\instancelatentrep)\\
    \forall j \notin \explanation&.\quad
    \minsimval{\explanation, j} = 0 \;\;\text{and}\;\;
    \maxsimval{\explanation, j} = \underset{i \in \explanation}{\min}\;\actveci{i}(\instancelatentrep).
\end{align*}

Notice that, in the top-$k$ explanation, the explanation is based on the activation space directly. The latent feature vector relevant for that activation is implicitly included in $\explanation$ so it behaves as if $\|\latentindices\|\!=\!1$.

\subsection{Constructing Abductive Latent Explanations}

Given a candidate explanation $\explanation \subseteq \prototypeindices\times\latentindices$, the constrained \textbf{prototype activation} space is represented by
\begin{displaymath}
\actvecexp= \{[0,\;\maxsimval{\explanation, j}]\}_{j\in\prototypeindices}
.\end{displaymath}

Following our preliminary work \cite{soria2025towards}, we introduce the notion of constructing an abductive latent explanation $\explanation$ in the prototype activation space by defining preconditions on the activation vector $\actvec \in \activationspace$. These preconditions are derived from $\explanation$ and $\instactval{}$. Crucially, they must guarantee that any activation vector $\actvec$ satisfying these conditions results in the same predicted class $\predclassidx$:
$$ \forall \actvec \in \actvecexp \subseteq \activationspace : \actvec \models\ \sigexp, \quad \argmax_{\classidx \in \classset} \logit{\classidx}(\actvec) = \predclassidx.$$
These preconditions effectively define a constrained region -  or set of constraints $\actvecexp$ within the activation space $\activationspace$ - such that $\instactval{}$ satisfies these constraints, and all vectors within this region also yield the prediction $\predclassidx$.

\begin{assumption}[Linear Logit Difference] \label{ass:linearity}
For any two classes $\classidx, \predclassidx$, the difference between their logit functions is linear in the prototype activation vector $\actvec$:
$$ \logit{\classidx}(\actvec) - \logit{\predclassidx}(\actvec) = \sum_{j=1}^{m} (w_{j\classidx} - w_{j\predclassidx})\;\actvec_j + (b_{\classidx} - b_{\predclassidx}) $$
for some weights $w_{j\classidx}, w_{j\predclassidx}$ and biases $b_{\classidx}, b_{\predclassidx}$.
\end{assumption}

\begin{definition}[Maximally Class-Favoring Element within $\actvecexp$] \label{def:max_favor_constr}
For a given explanation $\explanation$, predicted class $\predclassidx$, and 
class $\classidx \neq \predclassidx$, the \emph{maximally $(\classidx/\predclassidx)$-favoring \textbf{prototype activation} vector within $\actvecexp$} is denoted by $\maxfavsim{\classidx}{\predclassidx}{\explanation}$ and satisfies:
$$\forall \actvec \in \actvecexp.\quad \logit{\classidx}(\maxfavsim{\classidx}{\predclassidx}{\explanation}) - \logit{\predclassidx}(\maxfavsim{\classidx}{\predclassidx}{\explanation}) \ge \logit{\classidx}(\actvec) - \logit{\predclassidx}(\actvec).$$

Under Assumption \ref{ass:linearity}, its components $(\maxfavsim{\classidx}{\predclassidx}{\explanation})_j$ are constructed as:
    $$ (\maxfavsim{\classidx}{\predclassidx}{\explanation})_j =
       \begin{cases}
           (\maxsimval{\explanation})_j & \text{if } w_{j\classidx} \geq w_{j\predclassidx} \\
           (\minsimval{\explanation})_j & \text{if } w_{j\classidx} < w_{j\predclassidx}
       \end{cases}.
    $$
\end{definition}

Intuitively, this element \emph{satisfies} the condition expressed by the explanation $\explanation$, and prevents the most the classifier from reaching its initial prediction $\predclassidx$ (in favor of class $\classidx$).

\begin{definition}[Class-wise Prediction Domination within $\actvecexp$] \label{def:domination_constr}
For explanation $\explanation$, predicted class $\predclassidx$, and alternative class $\classidx \neq \predclassidx$, we say $\predclassidx$ \emph{dominates} $\classidx$ within $\simspaceconstr{\explanation}$, denoted $\psi_{\explanation}(\classidx, \predclassidx)$, if the logit of $\predclassidx$ is greater than or equal to the logit of $\classidx$ even for the maximally $(\classidx/\predclassidx)$-favoring vector:
$$ \psi_{\explanation}(\classidx, \predclassidx) \iff \logit{\predclassidx}(\maxfavsim{\classidx}{\predclassidx}{\explanation}) \geq \logit{\classidx}(\maxfavsim{\classidx}{\predclassidx}{\explanation}) .$$
\end{definition}

\begin{definition}[Total Prediction Domination within $\actvecexp$ (Explanation Verification)] \label{def:verification_constr}
A candidate explanation $\explanation$ is considered \emph{verified} if class $\predclassidx$ dominates all other classes $\classidx \neq \predclassidx$ within the constrained space $\simspaceconstr{\explanation}$:
$$ \text{Verify}(\explanation) \iff \forall \classidx \in \{1, \dots, \numclasses\}\setminus\{\predclassidx\}, \quad \psi_{\explanation}(\classidx, \predclassidx) .$$
\end{definition}

 In Definition~\ref{def:domination_constr} we say that a class (different from the predicted class) is \emph{verified} if its \emph{Maximally Class-Favoring Element} does not entail a different classification. In Definition~\ref{def:verification_constr} we say that the explanation \emph{verifies} the prediction if \textbf{all} classes are \emph{verified}.  


If an explanation $\explanation$ is verified according to Definition \ref{def:verification_constr}, it is an abductive explanation as presented in Definition \ref{def:latent-based-exp2}.
\begin{theorem}[Verified Explanation Sufficiency] \label{thm:verified_is_sufficient}
Let $\explanation$ be a candidate explanation for the decision $(\instancevec, \predclassidx)$. If $\explanation$ is verified according to Definition \ref{def:verification_constr} (i.e., $\text{Verify}(\explanation)$ is true), then $\explanation$ is a valid abductive explanation according to Definition \ref{def:latent-based-exp2}.
\end{theorem}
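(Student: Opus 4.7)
The plan is to unfold the definitions and chain three inequalities. I fix an arbitrary $\inputvec \in \inputspace$ satisfying the precondition $\sigexp(\encoder(\inputvec), \encoder(\instancevec))$ and write $\actvec := \actvec(\encoder(\inputvec))$ for its activation vector. Since $\predictor = \argmax \circ\, \decisionhead \circ \actvec \circ \encoder$, the target conclusion $\predictor(\inputvec) = \predclassidx$ reduces to showing $\logit{\predclassidx}(\actvec) \geq \logit{\classidx}(\actvec)$ for every $\classidx \neq \predclassidx$.

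The first bridge is a membership statement: $\sigexp(\encoder(\inputvec), \encoder(\instancevec))$ implies $\actvec \in \actvecexp$. This step holds by construction of $\sigexp$, which (in every paradigm exposed in the paper, be it triangular inequality, hypersphere approximation, or top-$k$) is precisely the conjunction of component-wise interval constraints $\actveci{j} \in [\minsimval{\explanation,j}, \maxsimval{\explanation,j}]$ that define $\actvecexp$. Hence this step requires no real work beyond unfolding the definitions.

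Next, I fix an arbitrary $\classidx \neq \predclassidx$ and invoke the hypothesis $\text{Verify}(\explanation)$. By Definition~\ref{def:verification_constr}, $\psi_{\explanation}(\classidx, \predclassidx)$ holds, which rearranges to $\logit{\classidx}(\maxfavsim{\classidx}{\predclassidx}{\explanation}) - \logit{\predclassidx}(\maxfavsim{\classidx}{\predclassidx}{\explanation}) \leq 0$. The defining property of the maximally class-favoring element in Definition~\ref{def:max_favor_constr} applied to the specific $\actvec \in \actvecexp$ obtained above gives
\begin{equation*}
\logit{\classidx}(\actvec) - \logit{\predclassidx}(\actvec) \;\leq\; \logit{\classidx}(\maxfavsim{\classidx}{\predclassidx}{\explanation}) - \logit{\predclassidx}(\maxfavsim{\classidx}{\predclassidx}{\explanation}) \;\leq\; 0.
\end{equation*}
Since $\classidx$ was arbitrary, $\predclassidx$ achieves the $\argmax$ at $\actvec$, so $\predictor(\inputvec) = \predclassidx = c$, which is exactly the requirement of Definition~\ref{def:latent-based-exp2}.

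The main obstacle is justifying that the coordinate-wise construction of $\maxfavsim{\classidx}{\predclassidx}{\explanation}$ in Definition~\ref{def:max_favor_constr} actually maximizes the logit gap over $\actvecexp$ (rather than merely being a well-defined candidate). This is where Assumption~\ref{ass:linearity} is essential: the difference $\logit{\classidx} - \logit{\predclassidx}$ is linear in $\actvec$ with coefficients $(w_{j\classidx} - w_{j\predclassidx})$, so maximizing this separable linear objective over the axis-aligned box $\actvecexp$ decouples into independent one-dimensional problems, each solved by picking the upper endpoint $\maxsimval{\explanation,j}$ when the coefficient is non-negative and the lower endpoint $\minsimval{\explanation,j}$ otherwise --- exactly the stated construction. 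Everything else in the argument is a routine chain of inequalities.
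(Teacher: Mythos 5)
Your proof is correct and follows essentially the same route as the paper's: fix an arbitrary activation vector satisfying the explanation's constraints, combine the maximizer property of $\maxfavsim{\classidx}{\predclassidx}{\explanation}$ with the verification condition $\psi_{\explanation}(\classidx,\predclassidx)$ to chain the inequalities, and conclude that $\predclassidx$ dominates every other class, hence $\predictor(\inputvec)=\predclassidx$. The only difference is that you make explicit two steps the paper leaves implicit --- that the precondition $\sigexp$ places the activation vector inside $\actvecexp$, and that under Assumption~\ref{ass:linearity} the coordinate-wise construction in Definition~\ref{def:max_favor_constr} genuinely maximizes the linear logit gap over the axis-aligned box --- which is a minor but sound addition.
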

\begin{proof}
Found in the Appendix.
\end{proof}

\subsection{Algorithms}
We present two algorithms for generating ALEs of model predictions. Pseudo-code for both algorithms can be found in the Appendix.
Algorithm 1
relies on the top-$k$ paradigm, where we iteratively add the next highest activated prototype to the running explanation, until the candidate is verified. \textsc{NextPrototype} selects a prototype from $\mathcal{A}$ the ordered list of prototypes by activations. Due to the fixed order of elements, the produced explanation is \textbf{sufficient} and \textbf{cardinality-minimal}.

Algorithm 2
relies on spatial constraints, as defined earlier in this section. We first go through a \textit{forward pass} where we iteratively add pairs of latent components and prototypes. From this candidate explanation, we deduce bounds on the activation space, then verify whether the running explanation's bounds guarantee the prediction. At the end of this stage, the candidate explanation is \textbf{sufficient}. Carefully, we add a \textit{backward pass} which downsizes the candidate explanation, i.e., iteratively attempts to remove pairs, keeping only those that lead the resulting pruned explanation to not be verified anymore,
until no pair can be removed --- this provides the \textbf{subset-minimality} property. While the search is \textit{quadratic} in the explanation's length, this complexity remains polynomial. This is a significant advantage over traditional formal methods (e.g., SMT/MILP-based) for pixel-wise explanations, which are \textit{NP-hard}.

Most importantly, these two algorithms do not rely on calling any external solvers.





\begin{table*}[htbp!]
    \centering
    \small
    \setlength{\tabcolsep}{3.5pt}
    \begin{tabular}{l|c|ccc||c|ccc}
        \toprule
        Dataset & \textit{Accuracy} & Triangle & Hypersphere & top-$k$ & $H_1\!\times\! W_1$ & top-$k$ (adj.) \\
        \midrule
        & & \multicolumn{3}{c||}{Avg Total / Avg Correct / Avg Incorrect} &  & \\
        \midrule
        CIFAR-10 & 0.83 & \textbf{8.7} / \textbf{6.6} / \textbf{19.4} & 20.2 / 8.9 / 28.8 & 41.4 / 36.9 / 61.9 & $1\times1$ & 41.4 / 36.9 / 61.9 \\
        CIFAR-100$\dagger$ & 0.62 & \textbf{323.2} /\textbf{ 276.7} / \textbf{394.3} & 672.9 / 574.4 / 820.2 & 896.6 / 867.6 / 940.8 & $1\times1$ & 672.9 / 867.6 / 940.8 \\
        MNIST$\dagger$ & 0.98 & \textbf{6.2} /\textbf{ 6.2} / - & 675 / 675 / - & 8.8 / 8.8 / - & $4\times4$ & 141 / 141 / - \\
        Oxford Flowers$\dagger$ & 0.72 & 546.9 / 394.8 / 973.5 & & \textbf{287.7} / \textbf{193.6} / \textbf{525.5} & $4\times4$ & 4602 / 3098 / 8408 \\
        Oxford Pet$\dagger$ & 0.82 & 3755.3 / 748.9 / 18130 & & \textbf{77.7} / \textbf{67.9} / \textbf{122.8} & $7\times7$ & 3805 / 3328 / 6016 \\
        Stanford Cars$\dagger$ & 0.90 & 5072.3 / 992.1 / 31633.6 & & \textbf{24.9} / \textbf{12.3} / \textbf{140.6} & $7\times7$ & 1219 / 600 / 6890 \\
        CUB200$\dagger$ & 0.84 & 10653.4 / 670.9 / 98000 & & \textbf{239.3} / \textbf{217.0} / \textbf{352.0} & $7\times7$ & 11725 / 10632 / 17251 \\
        \bottomrule
    \end{tabular}
    \caption{Summary of
    Average Explanation Sizes, 
    Correct/Incorrect Explanation Sizes, 
    for each dataset. 
    Accuracy on the datasets is given only for reference. The latent space dimensions are also referenced to show how the top-$k$ explanation size is 'adjusted' to fairly compare with spatial constraints ALEs. 
    $\dagger$ indicates that for that dataset, 
    the spatial constraints ALEs are computed on five randomly selected images per class. For MNIST, since no incorrect prediction was sampled, there are no obtained results on incorrect explanations sizes.
    For some higher resolution datasets, generating Hypersphere Intersection Approximation ALEs proved to be too computation intensive, and exceeded the two-day timeout. Best (smallest) results indicated in bold.}
    \label{tab:dataset_summary}
\end{table*}


\section{Experimental Study}
\label{sec:experimental_study}
\subsubsection{Computational Resources}
Our experiments were conducted on a SLURM-managed computing cluster, primarily utilizing GPU-equipped nodes for computationally intensive tasks, with each node featuring 48 cores at 2.6GHz, 187GB of RAM, and four NVIDIA V100 32GB GPUs.

\subsubsection{Methodology}
Our study leverages the \underline{Ca}se-\underline{B}ased \underline{R}easoning \underline{Net}work (CaBRNet) framework \cite{xu2024cabrnet} to train different models. 
We trained one ProtoPNet model per dataset considered. For the backbone, depending on the dataset resolution and number of classes, we used either a VGG \cite{simonyan2014very}, a ResNet \cite{he2016deep}, or a WideResNet \cite{zagoruyko2016wide}, using either the `default' ImageNet \cite{ILSVRC15} pretrained weights, or the iNaturalist \cite{su2021semi} pretrained weights. We parameterized our models such that
10 prototypes per class were initially allocated (as is the standard for case-based reasoning training). 

Existing Python libraries are known to exhibit non-trivial errors when computing distances.%
\footnote{cf. issue \url{https://github.com/pytorch/pytorch/issues/57690}}
This leads us to (sometimes) over-approximate bounds, which may also affect the cardinality of the explanations.

\subsubsection{Datasets}
In order to compute \emph{subset-minimal} ALEs, we trained several ProtoPNet models on a variety of datasets, including CUB200 \cite{wah2011caltech}, Stanford Cars \cite{krause20133d}, Oxford Pet \cite{parkhi12a}, Oxford Flowers \cite{Nilsback08}, CIFAR-10 and CIFAR-100 \cite{krizhevsky2009learning}, and MNIST \cite{lecun1998mnist}. These datasets were chosen for their diversity and relevance to the task at hand.
Several of these datasets
exhibit a hierarchical structure that allows them to be used for both fine-grained and coarse-grained classification tasks. 


When possible, we used the standard train/test splits provided by the dataset creators and followed common preprocessing procedures, such as resizing, cropping, and data augmentation techniques, described in \cite{chen2019looks}. Training models on these datasets was facilitated by their implementation in the \texttt{torchvision} library \cite{torchvision2016}.



\subsubsection{Results}\label{sec:discussion}
Table~\ref{tab:dataset_summary} summarizes the average sizes of explanations on all datasets, broken down by correct and incorrect classifications, for each paradigm.

When looking at the computations for top-$k$, we show
that 
the $10$ most activated prototypes are usually not enough to guarantee the decision.
On all datasets except MNIST, explanations on average require more than 10 similarity scores to be sufficient to justify the classification.
This result is significant as it entails that, for the models used, nearly all original ProtoPNet  explanations (according to Definition~\ref{def:protopnetxp}) are misleading, or optimistic.

Furthermore, we observe in Table~\ref{tab:dataset_summary} that, for samples that are incorrectly classified
the average explanation size is much higher. 
This would mean that information about a sample and the data distribution can be extracted from the ALEs themselves. This wide disparity is also linked to the computation time to generate an ALE --- as the candidate explanation grows during the forward pass, the time it takes to append an additional latent vector-prototype pair grows too. 
The observed phenomenon is exacerbated for spatial constraints ALE; for the CUB200, the Oxford Pet and Oxford Flowers datasets, incorrect predictions require an ALE that includes all possible pairs (i.e., $\explanation\!=\!\prototypeindices\times\latentindices$
). One interpretation of such result is that ALEs are relevant to understand the prediction only when it is correct. This finding corroborates with related work \cite{wu2024better} which uses the explanation size as proxy for out-of-distribution detection.

Another observation from our experiments is that, for small resolution datasets, the spatial constraints ALEs (for correct predictions) require \emph{less} pairs than top-$k$ ALEs, even before accounting for the relevant adjustments (multiplying by the number of latent components). We can deduce that, in certain contexts at least, formally proving the classification of an instance using our developed paradigms leads to more compact explanations.

\subsubsection{Discussion}
We show that producing subset-minimal and sufficient explanations leads to explanations of large size,  and therefore arguably not human-interpretable. This is not a limitation of our method but rather a key finding: current prototype-based networks need a lot of components to yield explanations with formal guarantees. This discovery nuances their previous claim of interpretability.



\section{Conclusion} 

In this work, we introduced Abductive Latent Explanations (ALE), a novel framework that formalizes explanations for prototype-based networks as rigorous abductive inferences within the latent space.
We proposed a solver-free algorithm for generating ALEs that drastically reduces computation time. Our analysis, enabled by ALEs, reveals that common prototype-based explanations can be misleading, raising concerns about their reliability in high-stakes decisions. Furthermore, we investigated the relationship between prediction correctness and ALE size, with findings showing that larger ALEs often correlate with incorrect predictions, suggesting ALE size as a potential proxy for model uncertainty.


\subsubsection{Limitations and Future Works}
A current limitation of our approach is the size of the obtained explanations. Explanations in the order of magnitude of thousands for a single instance can arguably be considered still too big to be actionable and interpretable by humans. Furthermore, prototypes are currently not directly linked to human concepts. We consider bridging symbolic reasoning and ALEs a worthwile and interesting research direction.
Finally, exploring training methods which lead to a more interpretable latent space, such as well separated class prototypes \cite{chen2019looks} and meaningful encodings \cite{chen2020concept} could help in generating more compact and valuable ALEs.
A direct extension of our work consists on taking further inspirations from Abductive Explanations to generate Constrastive Explanations. Extending ALEs to other prototypes modalities is also a potential future work. ALEs could also be useful to identify irrelevant latent components, opening new perspective on targeted pruning for neural networks and their explanations.



\section*{Acknowledgements}
  This publication was made possible by the use of the FactoryIA supercomputer,
  financially supported by the Ile-De-France Regional Council. This work was supported by the SAIF project, funded by the “France 2030” government investment plan managed by the French National Research Agency, under the reference ANR-23-PEIA-0006





\bibliography{aaai2026}


\appendix
\section{Technical Appendix}
This appendix presents: 
\begin{itemize}
    \item the two pseudo-codes for the algorithms presented,
    \item the proofs of the two theorems (Minimality of the Hypersphere Intersection Approximation) and (Verified Explanation Sufficiency),
\end{itemize}  


\subsection{Pseudo-codes}
\begin{algorithm}[htb!]
\caption{Generating top-$k$ ALE}
\label{alg:ale}
\textbf{Input}: Instance vector $\instancevec$, predicted class index $\predclassidx$\\
\textbf{Output}: Explanation $\explanation$
\begin{algorithmic}[1]
\STATE $\explanation = \emptyset$
\STATE UnverifiedClasses $= \classset \setminus \{\predclassidx\}$
\STATE $\mathcal{A} \gets \textsc{Sort}(\actvec(\instancevec))$
\WHILE{UnverifiedClasses $\neq \emptyset$}
    \STATE $j = \textsc{NextPrototype}(\explanation, \mathcal{A})$
    \STATE $\explanation \gets \explanation \cup j$
    \FOR{$\classidx \in$ UnverifiedClasses}
        \STATE $\actvec' \gets \maxfavsim{k}{c}{\explanation}$
        \IF{$h_{\predclassidx}(\actvec') > h_{\classidx}(\text{CEx})$}
            \STATE UnverifiedClasses $\gets$ UnverifiedClasses $\setminus \{\classidx\}$
        \ENDIF
    \ENDFOR
\ENDWHILE
\STATE \textbf{return} $\explanation$
\end{algorithmic}
\end{algorithm}


\begin{algorithm}[tb!]
\caption{Generating Spatial-Reasoning ALE}
\label{alg:spatial-ale}
\textbf{Input}: Instance vector $v$, predicted class $c$\\
\textbf{Parameter}: Paradigm 
\\
\textbf{Output}: Explanation $\explanation$
\begin{algorithmic}[1]
\STATE $\explanation = \emptyset$
\STATE $\explanation = \textsc{ExpInit}(v, c)$
\STATE UnverifiedClasses $= C \setminus \{c\}$
\WHILE{UnverifiedClasses $\neq \emptyset$}
    \STATE $(l, j) = \textsc{NextPair}(\explanation)$
    \STATE $d = \textsc{ComputeDistance}(v, l, j)$
    \STATE $\explanation \gets \explanation \cup \{(l, j)\}$
    \STATE $(\text{lb}, \text{ub}) = \textsc{GenerateBounds}(\explanation, \text{paradigm})$
    \STATE UnverifiedClasses $= \textsc{VerifExp}(\text{lb}, \text{ub})$
\ENDWHILE
\STATE MarkedPairs $= \emptyset$
\WHILE{$\explanation \neq \emptyset$}
    \IF{$|\explanation| = |\text{MarkedPairs}|$}
        \STATE \textbf{break}
    \ENDIF
    \STATE $(l, j) = \textsc{RemoveUnmarkedPair}(\explanation, \text{MarkedPairs})$
    \STATE $\explanation \gets \explanation \setminus \{(l, j)\}$
    \STATE $(\text{lb}, \text{ub}) = \textsc{GenerateBounds}(\explanation, \text{paradigm})$
    \STATE UnverifiedClasses $= \textsc{VerifExp}(\text{lb}, \text{ub})$
    \IF{UnverifiedClasses $\neq \emptyset$}
        \STATE MarkedPairs $\gets \text{MarkedPairs} \cup \{(l, j)\}$
        \STATE $\explanation \gets \explanation \cup \{(l, j)\}$
    \ENDIF
\ENDWHILE
\STATE \textbf{return} $\explanation$
\end{algorithmic}
\end{algorithm}

For Algorithm~\ref{alg:spatial-ale}, the auxiliary function \textsc{NextPair} used in the forward pass returns the closest latent vector-prototype pair not already included in the explanation. Another version we use in our experiments tries to efficiently assign the pairs to the explanation by not allowing one latent vector to be ``too much" explained, leaving other vectors behind. We refer to this as the Round-Robin assignment

In the backward pass, we ``mark" pairs that have been proven to be necessary for the explanation, i.e., removing them invalidates the explanation. We do this because, 
for two candidate explanations $\explanation_1, \explanation_2$ such that $\explanation_1 \subseteq \explanation_2$, and for pair $(l, j) \in \explanation_1$, for any paradigm:
\begin{align*}
    \text{lb}_i, \text{ub}_i = \textsc{GenerateBounds}(\explanation_i\setminus(l,j),\; \text{paradigm})\\
    \neg\textsc{VerifyExp}(\text{lb}_2,\text{ub}_2) \implies \neg\textsc{VerifyExp}(\text{lb}_1,\text{ub}_1)
\end{align*}
This means that, if $(l,j)$ is necessary for $\explanation_2$ it will be necessary for any explanation contained by it. The intuition behin function \textsc{VerifyExp} is visible in lines 7-11 of Algorithm~\ref{alg:ale}.

\subsection{Proofs}
\begin{theorem}[Containment and Minimality of the Hypersphere Intersection Approximation] 
\end{theorem}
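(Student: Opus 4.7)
The plan is to reduce the theorem to an explicit coordinate description of $S_{int}$, identify the stated formulas with the natural geometric invariants of that description, and close minimality with a standard antipodal-points argument.

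First, I would exploit translation and rotation invariance to place $\mathbf{C}_1$ at the origin and $\mathbf{C}_2 = d\,\mathbf{e}_1$, where $\mathbf{e}_1$ is the first coordinate unit vector. Any $\mathbf{s} \in S_{int}$ then satisfies $\|\mathbf{s}\|^2 = r_1^2$ and $\|\mathbf{s} - d\mathbf{e}_1\|^2 = r_2^2$; subtracting eliminates the quadratic terms and leaves the single linear condition $s_1 = a := \frac{r_1^2 + d^2 - r_2^2}{2d}$, so $S_{int}$ lies in the hyperplane $\{s_1 = a\}$ orthogonal to $\mathbf{C}_2 - \mathbf{C}_1$. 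Substituting back into $\|\mathbf{s}\|^2 = r_1^2$ shows that the remaining coordinates satisfy $\sum_{i \ge 2} s_i^2 = r_1^2 - a^2 =: \rho^2$, so $S_{int}$ is exactly the $(\latentcomponentdim-2)$-sphere of radius $\rho$ centered at $a\,\mathbf{e}_1$ inside that hyperplane (a $0$-sphere, i.e., an antipodal pair, when $\latentcomponentdim = 2$).

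Next, I would identify $(a, \rho)$ with Heron-type quantities on the triangle of side lengths $r_1, r_2, d$: $\rho$ equals the altitude dropped onto side $d$, so by Heron's formula $\rho = \frac{2}{d}\sqrt{p(p-d)(p-r_1)(p-r_2)}$, which matches $r_3$ in Definition~\ref{def:hypersphere_radius}; the Pythagorean relation $a^2 + \rho^2 = r_1^2$ gives $|a| = \sqrt{r_1^2 - r_3^2}$, matching the formula for $\mathbf{C}_3$ up to sign (with the sign positive in the generic non-obtuse-at-$\mathbf{C}_1$ configuration; the obtuse case can be absorbed by swapping the roles of $\mathbf{C}_1$ and $\mathbf{C}_2$, and the tangent cases $|r_1 - r_2| = d$ or $d = r_1 + r_2$ collapse to $\rho = 0$ and are trivial). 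Once this identification is in place, containment is automatic: for every $\mathbf{s} \in S_{int}$, $\|\mathbf{s} - \mathbf{C}_3\|^2 = (s_1 - a)^2 + \sum_{i \ge 2} s_i^2 = \rho^2 = r_3^2$, so $S_{int}$ actually sits on the boundary of $H_3$.

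For minimality, I would let $H'$ be any hypersphere of center $\mathbf{C}'$ and radius $r'$ with $S_{int} \subseteq H'$, pick any antipodal pair $\mathbf{s}, \mathbf{s}' \in S_{int}$ (these exist for $\rho > 0$ thanks to the reflection symmetry $\mathbf{s} \mapsto 2a\,\mathbf{e}_1 - \mathbf{s}$ of the $(\latentcomponentdim-2)$-sphere, and the $\rho = 0$ case is immediate), and apply the triangle inequality: $2\rho = \|\mathbf{s} - \mathbf{s}'\| \le \|\mathbf{s} - \mathbf{C}'\| + \|\mathbf{C}' - \mathbf{s}'\| \le 2r'$, hence $r' \ge \rho = r_3$. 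The main obstacle I anticipate is the bookkeeping in the second step: matching the square-root expression in Definition~\ref{def:hypersphere_radius} to the \emph{signed} foot-of-altitude $a$ requires either an orientation convention on $\mathbf{C}_2 - \mathbf{C}_1$ or the role-swap mentioned above when $a < 0$, and the Heron identity for $\rho$ is routine but a little algebra-heavy. Everything else reduces to elementary linear algebra and the triangle inequality.
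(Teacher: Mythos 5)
Your proof is correct and follows essentially the same route as the paper's: both characterize $S_{int}$ as a lower-dimensional sphere centered on the line through $\mathbf{C}_1$ and $\mathbf{C}_2$, obtain its radius as the Heron altitude of the triangle with sides $r_1, r_2, d$, get containment for free since $S_{int}$ lies on the boundary of $H_3$, and prove minimality by the antipodal-pair/triangle-inequality argument; your explicit coordinate placement is just a reformulation of the paper's projection-plus-Pythagoras step. Your remark about the sign of the foot of the altitude (the case $a<0$, when the formula for $\mathbf{C}_3$ needs the roles of $\mathbf{C}_1,\mathbf{C}_2$ swapped) is a detail the paper's proof silently glosses over, so it is a welcome addition rather than a deviation.
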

\textbf{Theorem 1}(Containment and Minimality of the Hypersphere Intersection Approximation)

\begin{proof}
The proof of Theorem~\ref{thm:containment_minimality} proceeds by first identifying the geometric shape of the intersection $S_{int}$ and constructing the smallest hypersphere that contains it. Finally, we will show that this constructed hypersphere is identical to $H_3$.

\paragraph{Part 1: Geometric Characterization of the Intersection}
Let $\mathbf{s}$ be an arbitrary point in the intersection $S_{int}$. By definition, it satisfies $\|\mathbf{s} - \mathbf{C}_1\|_2 = r_1$ and $\|\mathbf{s} - \mathbf{C}_2\|_2 = r_2$. Let $d = \|\mathbf{C}_1 - \mathbf{C}_2\|_2$.

The set of all such points $S_{int}$ forms a $(\latentcomponentdim-1)$-dimensional sphere. Let us find its center, which we will call $\mathbf{C}_{int}$, and its radius, which we will call $r_{int}$. The center $\mathbf{C}_{int}$ must lie on the line passing through $\mathbf{C}_1$ and $\mathbf{C}_2$. Let $\mathbf{C}_{int}$ be the orthogonal projection of any point $\mathbf{s} \in S_{int}$ onto this line. By the Pythagorean theorem applied to the right-angled triangles $\triangle \mathbf{s}\mathbf{C}_{int}\mathbf{C}_1$ and $\triangle \mathbf{s}\mathbf{C}_{int}\mathbf{C}_2$:
\begin{align*}
    \|\mathbf{s} - \mathbf{C}_{int}\|_2^2 + \|\mathbf{C}_{int} - \mathbf{C}_1\|_2^2 &= r_1^2 \\
    \|\mathbf{s} - \mathbf{C}_{int}\|_2^2 + \|\mathbf{C}_{int} - \mathbf{C}_2\|_2^2 &= r_2^2
\end{align*}
Subtracting the second equation from the first reveals that the position of $\mathbf{C}_{int}$ is independent of the choice of $\mathbf{s}$:
\[
\|\mathbf{C}_{int} - \mathbf{C}_1\|_2^2 - \|\mathbf{C}_{int} - \mathbf{C}_2\|_2^2 = r_1^2 - r_2^2
\]
The radius $r_{int}$ is the constant distance $\|\mathbf{s} - \mathbf{C}_{int}\|_2$ for any $\mathbf{s} \in S_{int}$. This distance is the height of the triangle $\triangle \mathbf{s}\mathbf{C}_1\mathbf{C}_2$ relative to the base of length $d$. The area of this triangle can be expressed using Heron's formula or using base and height:
\[
\text{Area} = \sqrt{p(p-d)(p-r_1)(p-r_2)} = \frac{1}{2} d \cdot r_{int}
\]
where $p = \frac{1}{2}(d+r_1+r_2)$. From this, we derive the radius of the intersection sphere:
\[
r_{int} = \frac{2}{d} \sqrt{p(p-d)(p-r_1)(p-r_2)}
\]
Thus, $S_{int}$ is a $(\latentcomponentdim-1)$-sphere with a well-defined center $\mathbf{C}_{int}$ and radius $r_{int}$.

\paragraph{Part 2: The Smallest Containing Hypersphere}
Let $H_{approx}$ be a $\latentcomponentdim$-dimensional ball with center $\mathbf{C}_{int}$ and radius $r_{int}$. By construction, for any point $\mathbf{s} \in S_{int}$, we have $\|\mathbf{s} - \mathbf{C}_{int}\|_2 = r_{int}$. This means every point in $S_{int}$ lies on the surface of $H_{approx}$, so $S_{int} \subseteq H_{approx}$.

We now show that $H_{approx}$ is the smallest such containing ball. Let $H'$ be any other hypersphere with center $\mathbf{C'}$ and radius $r'$ that contains $S_{int}$. Since $S_{int}$ is a sphere of radius $r_{int}$, we can choose two antipodal points $\mathbf{s}_a$ and $\mathbf{s}_b$ on it, such that $\|\mathbf{s}_a - \mathbf{s}_b\|_2 = 2r_{int}$.
Since $H'$ contains $S_{int}$, it must contain $\mathbf{s}_a$ and $\mathbf{s}_b$, so $\|\mathbf{s}_a - \mathbf{C'}\|_2 \le r'$ and $\|\mathbf{s}_b - \mathbf{C'}\|_2 \le r'$. By the triangle inequality:
\[
2r_{int} \!=\! \|\mathbf{s}_a - \mathbf{s}_b\|_2 \le \|\mathbf{s}_a - \mathbf{C'}\|_2 + \|\mathbf{C'} - \mathbf{s}_b\|_2 \le r' + r' \!=\! 2r'
.\]
This implies $r_{int} \le r'$. The radius $r_{int}$ is thus the minimum possible radius for any hypersphere containing $S_{int}$.

\paragraph{Conclusion}
We have shown that the intersection $S_{int}$ is contained in a unique smallest hypersphere, which has radius $r_{int}$ and a center $\mathbf{C}_{int}$ whose distance from $\mathbf{C}_1$ is found via $\|\mathbf{C}_{int} - \mathbf{C}_1\|_2^2 = r_1^2 - r_{int}^2$.

By inspecting the formulas presented in Definition~\ref{def:hypersphere_radius}, we see that the radius $r_3$ and center $\mathbf{C}_3$ of the hypersphere $H_3$ are defined to be identical to our derived radius $r_{int}$ and center $\mathbf{C}_{int}$.
Therefore, $H_3$ is precisely the smallest-radius hypersphere that contains the intersection $S_{int}$, proving both containment and minimality.
\end{proof}

\begin{theorem}[Verified Explanation Sufficiency]
\end{theorem}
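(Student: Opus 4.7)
}

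The plan is to chain three facts: (i) satisfaction of $\sigexp$ places the activation vector of $\inputvec$ inside the constrained box $\actvecexp$; (ii) Definition~\ref{def:max_favor_constr}, together with Assumption~\ref{ass:linearity}, guarantees that $\maxfavsim{\classidx}{\predclassidx}{\explanation}$ upper-bounds the logit difference $\logit{\classidx}-\logit{\predclassidx}$ uniformly over $\actvecexp$; (iii) the verification hypothesis forces this upper bound to be non-positive for every competitor class $\classidx$, which is exactly the definition of $\argmax$ selecting $\predclassidx$.

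Concretely, I would fix an arbitrary $\inputvec\in\inputspace$ with $\sigexp(\encoder(\inputvec),\encoder(\instancevec))$ true, and an arbitrary alternative class $\classidx\neq\predclassidx$. First, I unpack $\sigexp$ as constructed in Section~\ref{sec:building-abductive}: it enforces $\actveci{j}(\encoder(\inputvec))\in[\minsimval{\explanation,j},\maxsimval{\explanation,j}]$ for every $j\in\prototypeindices$, hence $\actvec(\encoder(\inputvec))\in\actvecexp$. Second, I apply Assumption~\ref{ass:linearity} to write
\begin{equation*}
\logit{\classidx}(\actvec)-\logit{\predclassidx}(\actvec)=\sum_{j=1}^{m}(w_{j\classidx}-w_{j\predclassidx})\,\actvec_{j}+(b_{\classidx}-b_{\predclassidx}),
\end{equation*}
which is coordinatewise monotone in $\actvec$: increasing in $\actvec_{j}$ when $w_{j\classidx}\geq w_{j\predclassidx}$ and decreasing otherwise. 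Because $\actvecexp$ is a product of intervals, the linear objective is maximized over $\actvecexp$ at the vertex prescribed by Definition~\ref{def:max_favor_constr}, namely $\maxfavsim{\classidx}{\predclassidx}{\explanation}$. Therefore
\begin{equation*}
\logit{\classidx}(\actvec(\encoder(\inputvec)))-\logit{\predclassidx}(\actvec(\encoder(\inputvec)))\leq\logit{\classidx}(\maxfavsim{\classidx}{\predclassidx}{\explanation})-\logit{\predclassidx}(\maxfavsim{\classidx}{\predclassidx}{\explanation}).
\end{equation*}

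Third, because $\text{Verify}(\explanation)$ holds, Definition~\ref{def:domination_constr} gives $\psi_{\explanation}(\classidx,\predclassidx)$, i.e.\ the right-hand side above is $\leq 0$. Combining, $\logit{\predclassidx}(\actvec(\encoder(\inputvec)))\geq\logit{\classidx}(\actvec(\encoder(\inputvec)))$ for every $\classidx\neq\predclassidx$, so $\argmax_{\classidx}\logit{\classidx}(\actvec(\encoder(\inputvec)))=\predclassidx$, and by the definition of $\predictor=\argmax\circ\decisionhead\circ\actvec\circ\encoder$ we conclude $\predictor(\inputvec)=\predclassidx$. Since $\inputvec$ was arbitrary, Definition~\ref{def:latent-based-exp2} is satisfied.

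I do not expect a serious obstacle: the argument is essentially a linear-programming style vertex maximization over a box. The only point requiring care is showing that the precondition $\sigexp$ really translates into $\actvec(\encoder(\inputvec))\in\actvecexp$ in all three paradigms (triangular inequality, hypersphere approximation, top-$k$); this is immediate from how $\minsimval{\explanation,j}$ and $\maxsimval{\explanation,j}$ were defined in Section~\ref{sec:building-abductive} as sound bounds on $\actveci{j}$. A minor caveat is that the linearity assumption is strictly necessary: without it, the vertex $\maxfavsim{\classidx}{\predclassidx}{\explanation}$ would not in general realize the supremum of the logit difference on $\actvecexp$, and the chain would break.
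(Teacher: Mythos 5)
Your proposal is correct and follows essentially the same argument as the paper's proof: bound the logit difference $\logit{\classidx}-\logit{\predclassidx}$ over the constrained activation region by its value at $\maxfavsim{\classidx}{\predclassidx}{\explanation}$, invoke $\text{Verify}(\explanation)$ to make that bound non-positive, and conclude $\argmax$ selects $\predclassidx$ for every admissible vector. The only difference is cosmetic: you explicitly justify, via Assumption~\ref{ass:linearity} and vertex maximization over the box, that the constructed element of Definition~\ref{def:max_favor_constr} realizes the maximum, a step the paper treats as definitional.
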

\begin{proof}
Assume the explanation $\explanation$ is verified, meaning $\text{Verify}(\explanation)$ holds. By Definition \ref{def:verification_constr}, this implies that for all classes $\classidx \neq \predclassidx$, the condition $\psi_{\explanation}(\classidx, \predclassidx)$ holds. By Definition \ref{def:domination_constr}, this means:
\begin{equation*} \label{eq:proof_domination_holds}
\forall \classidx \neq \predclassidx: \quad \logit{\predclassidx}(\maxfavsim{\classidx}{\predclassidx}{\explanation}) \geq \logit{\classidx}(\maxfavsim{\classidx}{\predclassidx}{\explanation})
\end{equation*}
where $\maxfavsim{\classidx}{\predclassidx}{\explanation} = \operatorname*{argmax}_{\simvec' \in \simspaceconstr{\explanation}} ( \logit{\classidx}(\simvec') - \logit{\predclassidx}(\simvec') )$.

We want to show that $\explanation$ satisfies Definition \ref{def:latent-based-exp2}. This requires showing that for any latent representation $\latentrep$ (corresponding to a similarity vector $\simvec$) such that its components satisfy the constraints imposed by $\explanation$ (i.e., $\simvec \in \simspaceconstr{\explanation}$), the classification is $\predclassidx$. That is, we need to show:
$$ \forall \simvec \in \simspaceconstr{\explanation}: \quad \predictorLatent(\latentrep) = \predclassidx $$
Since $\predictorLatent(\latentrep) = \operatorname*{argmax}_{\classidx'} \logit{\classidx'}(\simvec)$, this is equivalent to showing:
$$ \forall \simvec \in \simspaceconstr{\explanation}, \quad \forall \classidx \neq \predclassidx: \quad \logit{\predclassidx}(\simvec) \geq \logit{\classidx}(\simvec) $$

Let $\simvec$ be an arbitrary similarity vector in the constrained space $\simspaceconstr{\explanation}$.
Let $\classidx \neq \predclassidx$ be an arbitrary alternative class.

By the definition of $\maxfavsim{\classidx}{\predclassidx}{\explanation}$ as the maximizer of $(\logit{\classidx} - \logit{\predclassidx})$ within $\simspaceconstr{\explanation}$, we know that for our chosen $\simvec \in \simspaceconstr{\explanation}$:
$$ \logit{\classidx}(\maxfavsim{\classidx}{\predclassidx}{\explanation}) - \logit{\predclassidx}(\maxfavsim{\classidx}{\predclassidx}{\explanation}) \geq \logit{\classidx}(\simvec) - \logit{\predclassidx}(\simvec) $$

From our initial assumption that $\explanation$ is verified, we know from \eqref{eq:proof_domination_holds} that:
$$ \logit{\predclassidx}(\maxfavsim{\classidx}{\predclassidx}{\explanation}) \geq \logit{\classidx}(\maxfavsim{\classidx}{\predclassidx}{\explanation}) $$
This can be rewritten as:
$$ 0 \geq \logit{\classidx}(\maxfavsim{\classidx}{\predclassidx}{\explanation}) - \logit{\predclassidx}(\maxfavsim{\classidx}{\predclassidx}{\explanation}) $$

Combining these two inequalities, we have:
$$ 0 \geq \logit{\classidx}(\maxfavsim{\classidx}{\predclassidx}{\explanation}) - \logit{\predclassidx}(\maxfavsim{\classidx}{\predclassidx}{\explanation}) \geq \logit{\classidx}(\simvec) - \logit{\predclassidx}(\simvec) $$
This directly implies:
$$ 0 \geq \logit{\classidx}(\simvec) - \logit{\predclassidx}(\simvec) $$
Which rearranges to:
$$ \logit{\predclassidx}(\simvec) \geq \logit{\classidx}(\simvec) $$

Since $\simvec \in \simspaceconstr{\explanation}$ was arbitrary and $\classidx \neq \predclassidx$ was arbitrary, we have shown that $\logit{\predclassidx}(\simvec) \geq \logit{\classidx}(\simvec)$ for all $\simvec \in \simspaceconstr{\explanation}$ and for all $\classidx \neq \predclassidx$.
Therefore, for any $\simvec \in \simspaceconstr{\explanation}$, $\predclassidx = \operatorname*{argmax}_{\classidx'} \logit{\classidx'}(\simvec)$, which means $\predictorLatent(\latentrep) = \predclassidx$.

This fulfills the condition required by Definition \ref{def:latent-based-exp2}. Thus, if $\explanation$ is verified via Total Prediction Domination within $S_{\explanation}$, it is a sufficient abductive explanation.
\end{proof}

\subsection{Additional Figure}
\begin{figure}[htbp!]
\centering
\includegraphics[width=0.4\textwidth]{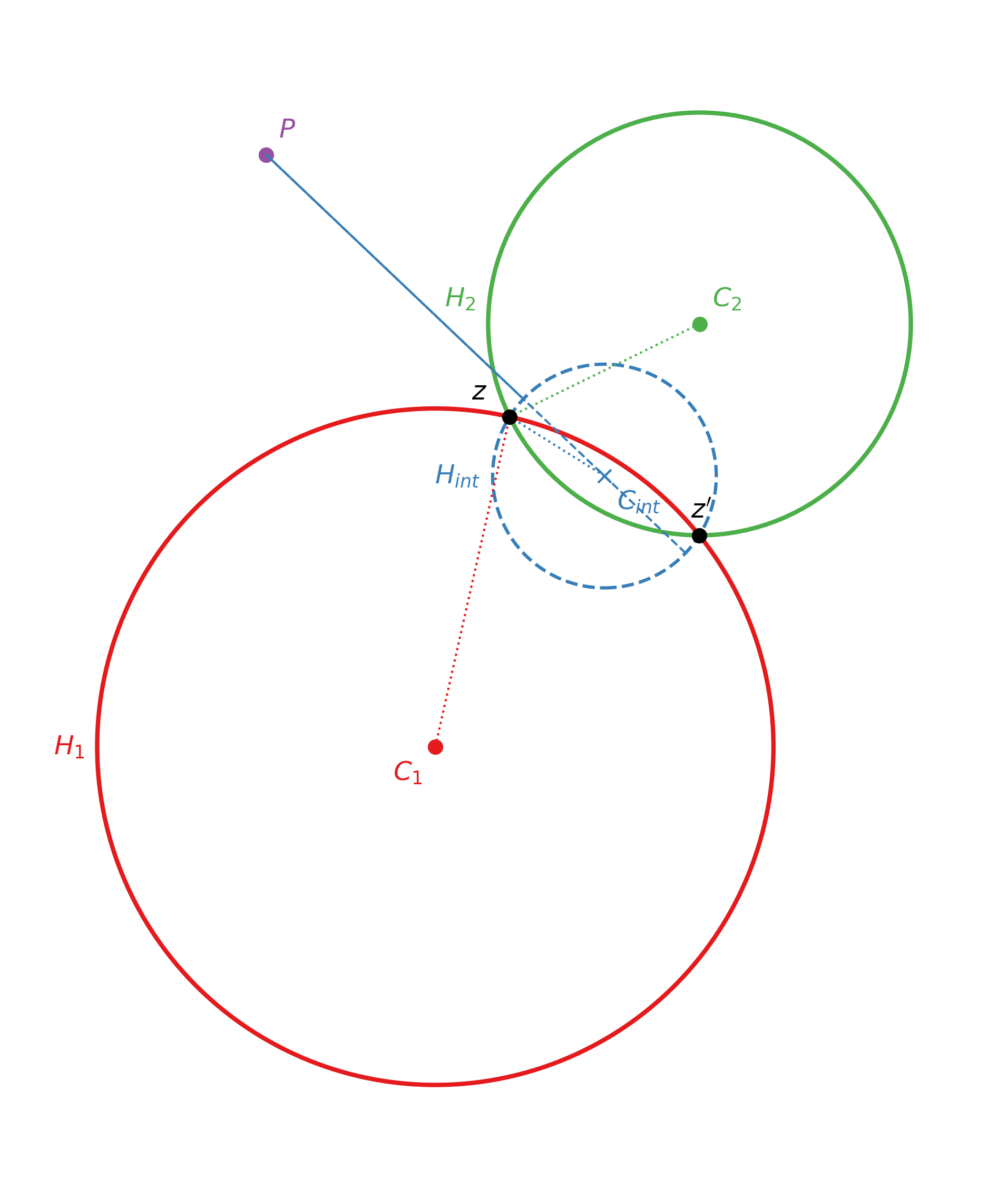}
\caption{Example (in $\mathbb{R}^2$) of a hypersphere {$H_{int}$} containing the intersection between the hypersphere {$H_1$} and {$H_2$}.}\label{fig:hypersphere-intersect}
\end{figure}

We can see the main idea behind our hypersphere intersection approximation on Figure~\ref{fig:hypersphere-intersect} . Each prototype is a center of a hypersphere, and the radius is its distance to the latent feature vector considered $z$. As illustrated, the small sphere $H_{int}$ that approximates the intersection between the two larger spheres leads to a better estimation of the feature vector itself. Furthermore, computing the lower and upper bound of the distance between another prototype $\mathbf{p}$ and the latent vector $z$ becomes trivial ($C_{int} \pm r_{int}$).

\end{document}